\def\ps@pprintTitle{%
 \let\@oddhead\@empty
 \let\@evenhead\@empty
 \def\@oddfoot{\centerline{\thepage}}%
 \let\@evenfoot\@oddfoot}
\newtheorem{theorem}{Theorem}
\newtheorem{prop}{Proposition}
\newtheorem*{remark}{Remark}
\begin{document}

\begin{frontmatter}

\title{Imbalanced Ensemble Classifier for learning from imbalanced business school data set}
\author{Tanujit Chakraborty\footnote{\textit{Email}: Tanujit Chakraborty (tanujit\_r@isical.ac.in)}\\
    {\scriptsize \textsuperscript{} SQC and OR Unit, Indian Statistical Institute, 203, B. T. Road, Kolkata - 700108,
    India}\\}

\begin{abstract}
Private business schools in India face a common problem of selecting
quality students for their MBA programs to achieve the desired
placement percentage. Generally, such data sets are biased towards
one class, i.e., imbalanced in nature. And learning from the
imbalanced dataset is a difficult proposition. This paper proposes
an imbalanced ensemble classifier which can handle the imbalanced
nature of the dataset and achieves higher accuracy in case of the
feature selection (selection of important characteristics of
students) cum classification problem (prediction of placements based
on the students' characteristics) for Indian business school
dataset. The optimal value of an important model parameter is found.
Numerical evidence is also provided using Indian business school
dataset to assess the outstanding performance of the proposed
classifier.
\end{abstract}

\begin{keyword}
Business School Problem, Imbalanced Data, Hellinger Distance,
Ensemble Classifier.
\end{keyword}

\end{frontmatter}

\section{Introduction}

Out of the many reasons behind the closing down of many of the
private business schools, the foremost one is the unemployment of
Master of Business Administration (MBA) students passing out of
these business schools. The most challenging job for administrations
is to find the optimal set of parameters for choosing the right
candidates in their MBA program which will ensure the employability
of the candidates. Attracting students in business schools are
highly dependent on the schools' past placement records. If the
right set of students are not selected for a few years, the number
of unplaced students will certainly accumulate, resulting in the
damage of reputation for the business school. One needs to develop a
model in such a way that the model ensures appropriate feature
selection (selection of important student's characteristics) with a
decision on the optimal values or ranges of the features and higher
prediction accuracy of the classifier as well. In our previous
works, we proposed a hybrid classifier based on classification tree
(CT) and artificial neural network (ANN) (to be referred to as
hybrid CT-ANN model in the rest of the paper) to solve the business
school problem \citep{chakrabortynovel}. In this article, we
identified a vital property of the business school data set, i.e.,
its imbalanced nature. Usual classifiers make a simple assumption
that the classes to be distinguished should have a comparable number
of instances. Many real-world data sets including the business
school dataset are skewed, in which many of the cases belong to a
larger class and fewer cases belong to a smaller, yet usually more
interesting class. There are also the cases where the cost of
misclassifying minority examples is much higher in terms of the
seriousness of the problem in hand \citep{rastgoo2016tackling}. Due
to higher weightage are given to the majority class, these systems
tend to misclassify the minority class examples as the majority, and
lead to a high false negative rate. In this particular example of
business school data set, it is clearly a two-class problem with the
class distribution of 80:20, where a straightforward method of
guessing all instances to be placed would achieve an accuracy of
80\%.

There are broadly two ways to deal with imbalanced data problems.
One such way to deal with the imbalanced data problems is to modify
the class distributions in the training data by applying sampling
techniques. Sampling techniques include oversampling the minority
class to match the size of the majority class and/or undersampling
the majority class to match the size of the minority class. Sampling
is a popular strategy to handle the data imbalance as it simply
rebalances the data at the data preprocessing stage. But these
approaches have obvious deficiencies like undersampling majority
instances may lose potential useful information of the data set and
oversampling increases the size of the training data set, which may
increase computational cost. Nonetheless, sampling is not the only
way for handling imbalanced data sets. There exist some specially
designed ``imbalanced data-oriented" algorithms which perform well
on unmodified original imbalanced data sets. One of the most
celebrated paper in the literature is hellinger distance decision
tree (HDDT) \citep{cieslak2008learning} which uses hellinger
distance (HD) as a decision tree splitting criterion and it is
insensitive towards the skewness of the class distribution
\citep{cieslak2012hellinger}. An immediate extension to this work is
HD based random forest (HDRF) \citep{su2015improving}. Another
breakthrough in the literature is the class confidence proportion
decision tree (CCPDT), a robust decision tree algorithm which can
also handle original imbalanced datasets \citep{liu2010robust}. It
is to be noted that ``imbalanced data-oriented" classifiers are
sometimes preferred since they work with original data sets. We are
therefore motivated to ask: Can we create an ensemble imbalanced
data-oriented classifier which can improve the performance of HDDT,
mitigate the need of sampling and solve an Indian business school
data problem?

In response to this question, we proposed an ensemble classifier for
feature selection cum classification problems which can be used to
solve the imbalanced business school dataset problem. Our proposed
ensemble classifier has the advantages of both the HDDT and ANN
algorithm and performs well in high dimensional feature spaces. The
optimal choice of an important model parameter is also proposed in
this paper. Further numerical evidence based on business school
dataset shows the robustness of the proposed algorithm.

This paper is organized as follows. In section 2, we describe the
proposed ensemble model. The theoretical results are presented in
section 3 and experimental evaluation is shown in section 4. Section
5 is fully devoted to the concluding remarks of the paper.

\section{Methodology}

\subsection{\textbf{An overview on HDDT}\\}

Chawla \citep{cieslak2008learning} proposed HDDT which uses HD as
the splitting criterion to build a decision tree. HD is used as a
measure of distributional divergence and has the property of skew
insensitivity \citep{rao1995review}. Let $(\Theta, \lambda)$ denote
a measurable space. For any binary classification problem, let us
suppose that $P$ and $Q$ be two continuous distributions with
respect to the parameter $\lambda$ having the densities $p$ and $q$
in a continuous space $\Omega$, respectively. Define HD as follows:
\[
d_{H}(P,Q)=\sqrt
{\int_{\Omega}(\sqrt{p}-\sqrt{q})^{2}d\lambda}=\sqrt
{2\bigg(1-\int_{\Omega}\sqrt{pq}d\lambda\bigg)}
\]
where $\int_{\Omega}\sqrt{pq}d\lambda$ is the Hellinger integral. It
is noted that HD doesn't depend on the choice of the parameter
$\lambda$. Given a countable space $\Phi$, HD can also be written as
follows:
\[
d_{H}(P,Q)=\sqrt{\sum_{\phi\in\Phi}\bigg(\sqrt{P(\phi)}-\sqrt{Q(\phi)}\bigg)^{2}}
\]
The bigger the value of HD, the better is the discrimination between
the features. A feature is selected that carries the minimal
affinity between the classes. For the application of HD as a
decision tree criterion, the final formulation can be given as
follows:
\begin{equation}
d_{H}(X_{+},X_{-})=\sqrt{\sum_{j=1}^{K}\bigg(\frac{|X_{+j}|}{|X_{+}|}-\frac{|X_{-j}|}{|X_{-}|}\bigg)^{2}}
\end{equation}
where $|X_{+}|$ indicates the number of examples that belong to the
majority class in training set and $|X_{+j}|$ is the subset of the
training set with the majority class and the value $j$ for the
feature $X$. A similar explanation can be written for $|X_{-}|$ and
$|X_{-j}|$ but for the minority class. Here $K$ is the number of
partitions of the feature space $X$. Since equation (6) is not
influenced by prior probability, it is insensitive to the class
distribution. Based on the experimental results, Chawla
\citep{cieslak2008learning} concluded that unpruned HDDT is
recommended for dealing with imbalanced problems as a better
alternative to sampling approaches.\\

\subsection{\textbf{An overview on ANN}\\}

Neural network models are inspired by biological nervous systems
\cite{fukushima1982neocognitron}. The network functions are
determined largely by the connections between elements. The train of
a neural network can be done by performing a particular function by
adjusting the values of the connections (weights) between elements.
Neural networks are trained so that a particular input (feature
vectors) leads to a specific target output (class level). The
network is adjusted, based on a comparison of the output and the
target, until the network output matches the predicted class.
Mapping function used in ANN is very flexible. Given the right
weights, this function can approximate almost any functional form to
any degree of accuracy. This function approximation is mainly done
by an activation function (for example, sigmoid, logsig, tansig,
etc). A common neural network architecture is shown in Fig. 1.

\begin{figure}[H]
\centering
\includegraphics[scale=0.50]{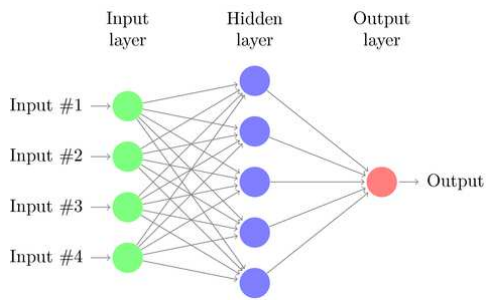}
\caption{An example of artificial neural network with one hidden
layer}
\end{figure}

While training the network with any particular dataset, the problem
of overfitting can be avoided by training the network for a limited
number of epochs \cite{yegnanarayana2009artificial}. Standard
backpropagation (feedforward) is a gradient descent algorithm where
the weights are moved along the negative of the gradient of the
performance function. Typically, a new input leads to an output
similar to the correct outputs if it is properly trained for input
vectors used in training. Complex neural networks have more than one
hidden layers in its architecture.\\

\subsection{\textbf{Proposed Imbalanced Ensemble Classifier}\\}

The motivation behind designing an ensemble classifier for
imbalanced data sets is that one we would like to work with the
original data set without taking recourse to sampling. Here we are
going to create an ensemble classifier which will utilize the power
of HDDT as well as the superiority of neural networks. In the
proposed imbalanced ensemble classifier (to be denoted by IEC in the
rest of the paper), we first split the feature space into areas by
HDDT algorithm. Most important features are chosen using HDDT and
redundant features are extracted. We then build a ANN model using
the important variables obtained through HDDT algorithm. Also, the
prediction results obtained from HDDT are used as another input
information in the input layer of neural networks. The effectiveness
of the proposed classifier lies in the selection of important
features and using prediction results of HDDT followed by the ANN
model. The inclusion of HDDT output as an additional input feature
not only improves the model accuracy but also increases class
separability. The informal workflow of our proposed IEC model, shown
in Figure 2 is as follows:

\begin{figure}[t]
\centering
\includegraphics[scale=0.48]{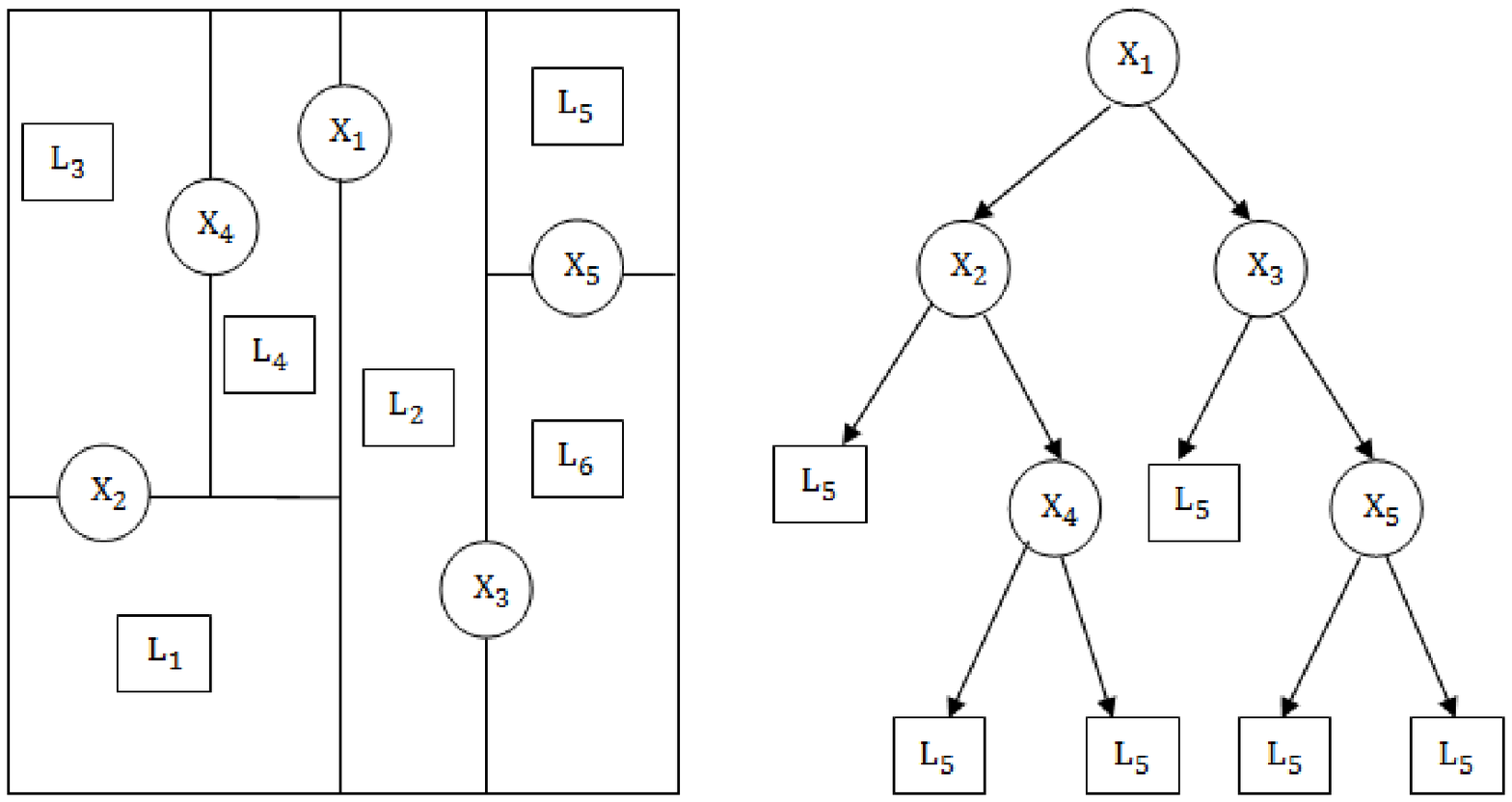}
\includegraphics[scale=0.48]{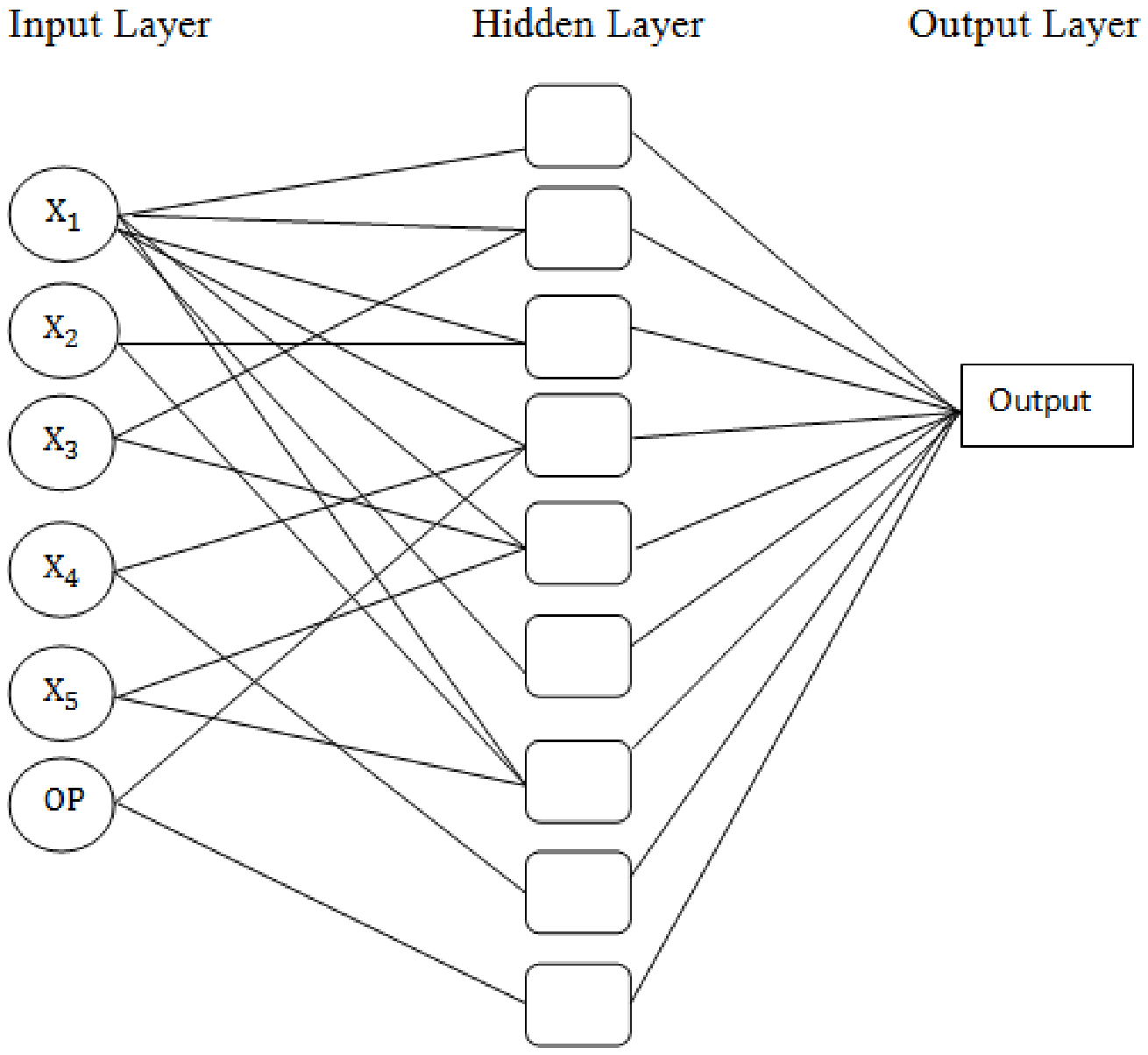}
\caption{An example of superensemble classifier with $X_{i};
i=1,2,3,4,5$ as Important features obtained by HDDT, $L_{i}$ as leaf
nodes and OP as HDDT output.} \label{figmodRWNGT}
\end{figure}

\begin{itemize}
\item Sort the feature value in ascending order and find the splits
between adjacent different values of the feature. Calculate the
binary conditional probability divergence at each split using HD
measure (see equation (1)).
\item Record the highest divergence as the divergence of the whole
feature. Choose the feature that has maximum HD value and grow
unpruned HDDT.
\item Using the HDDT algorithm, build a decision tree. Feature selection model generated by HDDT takes into account the
imbalanced nature of the data set.
\item The prediction result of HDDT algorithm is used as an additional feature in
the input layer of the ANN model. Export important input variables
along with additional feature to the ANN model and a neural network
is generated.
\item Since the output results of HDDT has been incorporated as an
additional feature along with other important features obtained by
HDDT in the input layer of ANN, the number of hidden layer is chosen
to be one.
\item A one hidden layered ANN with sigmoid activation function
having number of neurons in the hidden layer to be
$O\bigg(\sqrt\frac{n}{dlogn}\bigg)$, where $n$ is the number of
training samples, $d$ is the number of input features in the ANN
model, is trained (see section 3). And finally record the
classification results.
\end{itemize}

IEC not only handles imbalance through the implementation of HDDT in
selecting features but also improves the performance of the
classifier by incorporating better classification results for the
data set obtained from HDDT and the model gets improved using ANN
algorithm. This algorithm is a two-step problem-solving approach
such as handling imbalanced class distribution, selecting important
features and getting an improved ensemble classifier. The optimal
characteristics of students which affect the placements can be
chosen by our model and future predictions while modeling the
imbalanced dataset can also be done by IEC.

\section{Optimal value of IEC model parameter}

Our proposed IEC has the following architecture: first, it extracts
important features from the feature space using the HDDT algorithm,
then it builds one hidden layered ANN model with the important
features extracted using HDDT along with HDDT outputs as an
additional feature. Now we are going to find out the optimal value
of the number of neurons in the hidden layer of the proposed model.

Let $\underline{X}$ be the space of all possible values of $p$
features and $C$ be the set of all possible binary outcomes. We are
given a training sample with $n$ observations, \\
$L=\{(X_{1},C_{1}), (X_{2},C_{2}),...,(X_{n},C_{n})\}$, where
$X_{i}=(X_{i1},X_{i2},...,X_{ip}) \in \underline{X}$ and $C_{i} \in
C$. We build IEC with HDDT given features and $OP$ as another input
feature in the model. The dimension of the input layer in the ANN
model, to be denoted by $d_{m}(\leq{p})$, is the number of important
features obtained by HDDT + 1. We have used one hidden layer in the
model due to the incorporation of $OP$ as an input information in
the model. It should be noted that one-hidden layered neural
networks yield strong universal consistency and there is little
theoretical gain in considering two or more hidden layered neural
networks \cite{devroye2013probabilistic}. In IEC model, we have used
one hidden layer with $k$ neurons. This makes the proposed ensemble
binary classifier less complex and less time consuming while
implementing the model. After elimination of redundant features by
HDDT and incorporating OP as another input vector, let us now
consider the following training sequence
$\xi_{n}=\{(Z_{1},Y_{1}),...,(Z_{n},Y_{n})\}$ of $n$ i.i.d copies of
$(\underline{Z},\underline{Y})$ taking values from
$\mathbb{R}^{d_{m}}\times C$. A classification rule realized by a
one-hidden layered neural network having logistic sigmoid activation
function is chosen to minimize the empirical $L_{1}$ risk, where the
$L_{1}$ error of a function $\psi : \mathbb{R}^{d_{m}}\rightarrow
\{0,1\}$ is defined by $J(\psi)=E\{ |\psi(Z)-Y| \}$. The theorem
stated below is based on the idea of Lugosi \& Zeger (1995)
\cite{lugosi1995nonparametric} which states the regularity
conditions for universal consistency of the one hidden layered ANN
model.

\begin{theorem}
Consider a neural network with one hidden layer with bounded
output weight having $k$ hidden neurons and let $\sigma$ be a
logistic squasher. Let $\mathscr{F}_{n,k}$ be the class of neural
networks with logistic squasher defined as
\[
\mathscr{F}_{n,k}=\Bigg\{
\sum_{i=1}^{k}c_{i}\sigma(a_{i}^{T}z+b_{i})+c_{0} : k \in
\mathbb{N}, a_{i} \in \mathbb{R}^{d_{m}}, b_{i},c_{i} \in
\mathbb{R}, \sum_{i=0}^{k}|c_{i}|\leq \beta_{n} \Bigg\}
\]
and let $\psi_{n}$ be the function that minimizes the empirical
$L_{1}$ error over $\psi_{n} \in \mathscr{F}_{n,k}$. It can be shown
that if $k$ and $\beta_{n}$ satisfy
\[
k \rightarrow \infty , \quad \beta_{n} \rightarrow \infty , \quad
\frac{k \beta_{n}^{2}log(k \beta_{n})}{n} \rightarrow 0
\]
then the classification rule
\begin{equation}
  g_{n}(z)=\begin{cases}
    0, & \text{if $\psi_{n}(z)\leq 1/2$}.\\
    1, & \text{otherwise}.
  \end{cases}
\end{equation}
is universally consistent.
\end{theorem}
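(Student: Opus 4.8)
The plan is to prove universal consistency by the standard decomposition of the excess $L_1$ risk into an approximation error and an estimation error, and then show that both vanish as $n \to \infty$ under the stated growth conditions on $k$ and $\beta_n$. Recall that universal consistency means $J(g_n) \to J^* = L^*$ in probability (or almost surely) for every distribution of $(\underline{Z},\underline{Y})$, where $L^*$ is the Bayes risk. Because the classifier $g_n$ is obtained by thresholding $\psi_n$ at $1/2$, a key first step is to invoke the standard bound relating the classification error of the plug-in rule to the $L_1$ deviation of $\psi_n$ from the regression function $\eta(z) = E[Y \mid Z = z]$; specifically, $J(g_n) - L^* \le 2\, E\{|\psi_n(Z) - \eta(Z)|\}$. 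This reduces the problem to showing that the $L_1$ approximation of $\eta$ by the empirically selected network $\psi_n$ converges to zero.

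Next I would split $E\{|\psi_n(Z)-\eta(Z)|\}$ into the estimation error $\big(J(\psi_n) - \inf_{\psi \in \mathscr{F}_{n,k}} J(\psi)\big)$ and the approximation error $\big(\inf_{\psi \in \mathscr{F}_{n,k}} J(\psi) - J(\eta)\big)$. For the approximation term I would appeal to the universal approximation property of one-hidden-layer networks with a logistic squasher: since $k \to \infty$ and $\beta_n \to \infty$, the sieve $\mathscr{F}_{n,k}$ eventually approximates any measurable $\eta \in L_1$ arbitrarily well, so this term tends to zero. For the estimation term I would bound the deviation of empirical risk from true risk uniformly over $\mathscr{F}_{n,k}$ using Vapnik–Chervonenkis / covering-number arguments: the key quantity is the covering number of the class of networks with $k$ neurons and output-weight budget $\beta_n$, which grows polynomially in $k\beta_n$ and whose logarithm scales like $k\beta_n^2 \log(k\beta_n)$. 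Plugging this into a uniform deviation inequality of the form $P\big(\sup_{\psi \in \mathscr{F}_{n,k}} |J_n(\psi) - J(\psi)| > \epsilon\big) \le \text{(covering number)} \cdot e^{-c n \epsilon^2}$ shows that the estimation error vanishes precisely when $\frac{k\beta_n^2 \log(k\beta_n)}{n} \to 0$, which is the third hypothesis of the theorem.

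Combining the two pieces, both error terms tend to zero, hence $E\{|\psi_n(Z)-\eta(Z)|\} \to 0$ and therefore $J(g_n) \to L^*$, establishing universal consistency. Throughout, the $L_1$-risk formulation is convenient because $|\psi(z)-Y|$ with $Y \in \{0,1\}$ links directly to the regression function $\eta$, and the boundedness of the output weights (the constraint $\sum_{i=0}^{k}|c_i| \le \beta_n$) is what makes the covering-number control tractable.

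I expect the main obstacle to be the estimation-error step: obtaining the correct covering-number (or pseudo-dimension) estimate for the sieve $\mathscr{F}_{n,k}$ and verifying that the resulting uniform deviation bound decays exactly under the condition $\frac{k\beta_n^2 \log(k\beta_n)}{n} \to 0$. This is precisely where the structural result of Lugosi \& Zeger (1995) \cite{lugosi1995nonparametric} is invoked; the delicate part is tracking how the metric entropy depends jointly on the number of neurons $k$ and the weight bound $\beta_n$, since a loose bound would require a stronger growth restriction than the one stated. The approximation step, by contrast, is essentially a direct consequence of the density of sigmoidal networks in $L_1$ and should follow once $k$ and $\beta_n$ are allowed to diverge.
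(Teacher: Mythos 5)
Your proposal is essentially correct and matches the intended argument: the paper itself gives no proof of Theorem 1, deferring entirely to Lugosi and Zeger (1995), and the only trace of the argument in the text is exactly the estimation/approximation decomposition $J(\psi_n)-J^* = \big(J(\psi_n)-\inf_{\psi\in\mathscr{F}_{n,k}}J(\psi)\big) + \big(\inf_{\psi\in\mathscr{F}_{n,k}}J(\psi)-J^*\big)$ that you use, together with the plug-in bound and the density of sigmoidal networks for the approximation term. One minor bookkeeping point: in the cited argument the factor $\beta_n^2$ in the condition $k\beta_n^2\log(k\beta_n)/n\to 0$ arises from the range of the functions in $\mathscr{F}_{n,k}$ (bounded by $\beta_n$), which enters the Hoeffding exponent as $e^{-cn\epsilon^2/\beta_n^2}$, rather than from the metric entropy itself, whose logarithm grows roughly like $k\log(k\beta_n)$; your version attributes the $\beta_n^2$ to the covering number, but the combined condition and the conclusion are the same.
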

Equivalently, we write $J(\psi_{n})-J^{*} \rightarrow 0$ in
probability, where $J(\psi_{n})=E\{|\psi_{n}(Z)-Y| | \xi_{n}\}$ and
$J^{*}=\inf_{\psi_{n}}J(\psi_{n})$ \cite{devroye2013probabilistic}.
Write
\[
J(\psi_{n})-J^{*} = \bigg(  J(\psi_{n})-\inf_{\psi \in
\mathscr{F}_{n,k}} J(\psi) \bigg)  + \bigg ( \inf_{\psi \in
\mathscr{F}_{n,k}} J(\psi) - J^{*} \bigg )
\]
where, $( J(\psi_{n})-\inf_{\psi \in \mathscr{F}_{n,k}} J(\psi) )$
is called estimation error and $( \inf_{\psi \in \mathscr{F}_{n,k}}
J(\psi) - J^{*} )$ is called approximation error.\\

To obtain the optimal choice of $k$ of the proposed model it is
necessary to obtain the upper bounds on the rate of convergence,
i.e., how fast $J(\psi_{n})$ approaches to zero
\cite{gyorfi2006distribution}. Though in case of the rate of
convergence of estimation error, we will have a distribution-free
upper bound \cite{farago1993strong}. And to obtain the optimal value
of $k$, it is enough to find upper bounds of the estimation and
approximation errors. The upper bound of approximation error
investigated by Baron \cite{barron1993universal}.

\begin{prop}
For a fixed $d_{m}$, let $\psi_{n} \in \mathscr{F}_{n,k}$. If the
proposed model satisfies the regularity conditions of strong
universal consistency as stated in Theorem 1, then the optimal
choice of $k$ is $O\bigg( \sqrt{\frac{n}{d_{m}log(n)}} \bigg)$.
\end{prop}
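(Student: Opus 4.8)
The plan is to work directly with the error decomposition displayed just before the statement, namely $J(\psi_{n})-J^{*} = \big(J(\psi_{n})-\inf_{\psi \in \mathscr{F}_{n,k}} J(\psi)\big) + \big(\inf_{\psi \in \mathscr{F}_{n,k}} J(\psi) - J^{*}\big)$, and to obtain an explicit upper bound on each of the two terms as a function of $k$ (with $n$ and $d_{m}$ treated as parameters). The optimal $k$ is then the value that balances the two bounds: enlarging $k$ shrinks the approximation error but inflates the estimation error, and vice versa. So the strategy is (i) bound the approximation error from above, (ii) bound the estimation error from above, (iii) add the two bounds, regard the sum as a smooth function of $k$, minimize it, and read off the order of the minimizer.

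For the approximation error I would invoke Barron's result \cite{barron1993universal}: for a target of bounded Barron norm, a one-hidden-layer sigmoidal network with $k$ neurons approximates the Bayes-optimal function at rate $O(1/\sqrt{k})$, and crucially this rate is free of the input dimension $d_{m}$. Hence $\inf_{\psi \in \mathscr{F}_{n,k}} J(\psi) - J^{*} = O(1/\sqrt{k})$. For the estimation error I would use the empirical-risk-minimization machinery of Lugosi and Zeger \cite{lugosi1995nonparametric} together with the distribution-free covering-number and VC bounds already referenced in the excerpt \cite{farago1993strong,gyorfi2006distribution}: the class $\mathscr{F}_{n,k}$ is parametrized by the $k$ input-weight vectors $a_{i} \in \mathbb{R}^{d_{m}}$ together with the biases and output weights, so its pseudo-dimension grows like $k d_{m}$ up to logarithmic factors, which yields an estimation-error bound of order $\sqrt{k d_{m}\log n / n}$. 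This is the step where both the dimension $d_{m}$ and the $\log n$ factor enter, and it is the part I expect to be delicate: one must track the covering number of $\mathscr{F}_{n,k}$ carefully, absorb the output-weight bound $\beta_{n}$ and the assorted logarithmic terms, and confirm that, under the regularity conditions of Theorem 1, these do not alter the leading order of the bound.

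With both pieces in hand, I would write the total bound as $E(k) \asymp c_{1} k^{-1/2} + c_{2}\sqrt{d_{m}\log n / n}\; k^{1/2}$, differentiate in $k$, and set $E'(k)=0$. This gives $c_{1} k^{-3/2} \asymp c_{2}\sqrt{d_{m}\log n / n}\; k^{-1/2}$, i.e. $k^{-1} \asymp \sqrt{d_{m}\log n / n}$, so the minimizer is $k^{*} = O\!\big(\sqrt{n/(d_{m}\log n)}\big)$, exactly the claimed order. I would then verify consistency with Theorem 1 by checking that this $k^{*}$ (which still satisfies $k \to \infty$) admits a companion sequence $\beta_{n}\to\infty$ for which $k\beta_{n}^{2}\log(k\beta_{n})/n \to 0$, so the optimal network stays inside the universally consistent regime \cite{devroye2013probabilistic}. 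The main obstacle, as noted, is the estimation-error bound: pinning down the precise $k d_{m}$-and-$\log n$ dependence of the covering number so that the balancing returns the stated rate rather than a spurious extra factor.
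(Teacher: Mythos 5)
Your proposal is correct and follows essentially the same route as the paper: Barron's $O(1/\sqrt{k})$ approximation bound, the $O\big(\sqrt{kd_{m}\log n/n}\big)$ estimation bound from \cite{farago1993strong}, and balancing the two to obtain $k=O\big(\sqrt{n/(d_{m}\log n)}\big)$. Your explicit minimization of $E(k)$ and the closing check that $k^{*}$ remains compatible with the consistency conditions of Theorem 1 are harmless refinements of the paper's simpler step of equating the two terms.
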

\begin{proof}

The upper bound of approximation error is found by Baron
\cite{barron1993universal} to be $O \bigg ( \frac{1}{\sqrt{k}}
\bigg)$. \\Though the approximation error goes to zero as the number
of neurons goes to infinity for strongly universally consistent
classifier, for practical implementation the number of neurons is
often fixed (eg., can't be increased with the size of the training
sample). \\ Using lemma 3 of \cite{farago1993strong}, we can write
that the estimation error is always $O \bigg(
\sqrt{\frac{kd_{m}log(n)}{n}}\bigg)$. \\ Bringing the above facts
together, we can write
\[
J(\psi_{n})-J^{*}=O \bigg( \sqrt{\frac{kd_{m}log(n)}{n}} +
\frac{1}{\sqrt{k}} \bigg)
\]
Now, to find optimal value of $k$, the problem reduces to equating
$\sqrt{\frac{kd_{m}log(n)}{n}}$ with $\frac{1}{\sqrt{k}}$, which
gives $k=O\bigg( \sqrt{\frac{n}{d_{m}log(n)}} \bigg)$.
\end{proof}

\begin{remark}
The optimal value of hidden nodes is found to be $O\bigg(
\sqrt{\frac{n}{d_{m}log(n)}} \bigg)$ for the universally consistent
IEC model. For practical use, if the data set is small or
medium-sized, the recommendation is to use the number of hidden
nodes in the hidden layer to be $ \sqrt{\frac{n}{d_{m}log(n)}}$ for
achieving the utmost accuracy of the proposed model. The practical
usefulness and competitiveness of the proposed classifier in solving
a real life imbalanced business school data problem are shown in the
next Section.
\end{remark}

\section{Application to Indian Business School Data}

In this section, we first describe the business school data in brief
and also discuss different evaluation measures that are used in this
study. Subsequently, we are going to report the experimental results
and compare our proposed IEC model with other state-of-the-art
classifiers.

\subsection{\textbf{Description of data set}}

The data was provided by a private business school which receives
applications for the MBA program from across the country and admits
a pre-specified number of students every year. This dataset
comprises several parameters of last $5$ years passed out students'
profile along with their placement information. The dataset has 17
explanatory variables out of which 7 categorical variables and 10
continuous variables which represent the parameters of the students
and one response variable, namely placement which indicates whether
the student got placed or not \cite{chakrabortynovel}. In order to
measure the level of imbalance of these datasets, we compute the
coefficient of variation (CV) which is the proportion of the
deviation in the observed number of examples for each class versus
the expected number of examples in each class \cite{wu2007local}.
The datasets with a CV more than equal to $0.30-$ a class ratio of
$2:1$ on a binary dataset is chosen as imbalanced data. In the
business school dataset, CV turns out to be 0.50. We also applied
$5\times2$ cross-validation while evaluating classifiers on the
datasets \cite{dietterich1998approximate}, in which each dataset is
broken into class-stratified halves, allowing two experiments in
each half, one is used as training ($70\%$ of the data) and others
as testing ($30\%$ of the data). The experiments are repeated 5
times and the average results are reported in the paper. Table 1
gives an overview of these data sets.

\begin{table}[H]
\tiny \centering \caption{Sample business school data set.}
    \begin{tabular}{cccccccccc}
        \hline
        ID  & Gender & SSC  & HSC  & Degree  & E.Test  &  SSC &  HSC &  HSC & Placement          \\
            &        & Percentage & Percentage &  Percentage &  Percentile & Board & Board & Stream  &  \\ \hline
        1  & M      & 68.4   & 85.6    & 72      & 70   & ICSE    & ISC    & Commerce & Y        \\
        2  & M      & 59     & 62      & 50      & 79   & CBSE    & CBSE   & Commerce & Y \\
        3  & M      & 65.9   & 86      & 72      & 66   & Others  & Others & Commerce & Y \\
        4  & F      & 56     & 78      & 62.4    & 50.8 & ICSE    & ISC    & Commerce & Y \\
        5  & F      & 64     & 68      & 61      & 24.3 & Others  & Others & Commerce & N \\
        6  & F      & 70     & 55      & 62      & 89   & Others  & Others & Science  & Y \\
        .  & .      & .      & .       & .       & .    & .       & .      & .        & . \\
        .  & .      & .      & .       & .       & .    & .       & .      & .        & . \\
        .  & .      & .      & .       & .       & .    & .       & .      & .        & . \\
        \hline
    \end{tabular}

\end{table}

\subsection{\textbf{Performance measures}}

The performance evaluation measures used in our experimental
analysis are based on the confusion matrix. Higher the value of
performance metrics, the better the classifier is. The
expressions for different performance measures as follows: \\\\
G-mean = $\sqrt{\mbox{Sensitivity}\times \mbox{Specificity}}$; AUC =
$\frac{\mbox{Sensitivity}+\mbox{Specificity}}{2}$;\\ F-measure =
$2\frac{\big(\mbox{Precision}\times\mbox{Sensitivity}\big)}{\big(\mbox{Precision}+\mbox{Sensitivity}\big)}$;
Accuracy = $\frac{(TP+TN)}{(TP+TN+FP+FN)}$; \\\\
where, Precision = $\frac{TP}{TP+FP}$; Sensitivity =
$\frac{TP}{TP+FN}$; Specificity = $\frac{TN}{FP+TN}$.

\subsection{\textbf{Analysis of results}}

We aim to select the optimal set of features and the corresponding
model for the selection of the right set of students who will be fit
for the MBA program of a business school and subsequently will be
placed as well. We compare our proposed imbalanced ensemble
classifier (IEC) with mostly other similar types of ``imbalanced
data-oriented" classifiers. Different performance metrics are
computed to draw the conclusion from the experimental results. All
the methods were implemented in the R Statistical package on a PC
with 2.1 GHz processor and 8 GB memory. \\ We started the
experimentation with HDDT algorithm by using R Package `CORElearn'
for learning from imbalanced business school data set. HDDT achieved
around 93\% accuracy while CT achieved around 83\% accuracy. This
gives an indication that ``imbalanced data-oriented" classifiers
perform better than the traditional supervised classifiers designed
for general purposes. Further, we implemented HDRF, CCPDT which are
among other imbalanced data-oriented algorithms. Finally, we applied
our proposed imbalanced ensemble classifier which is a two-step
methodology. In the first stage, we select important features using
HDDT and record its classification outputs. Below are the important
features we obtained for business school data set by applying HDDT:
SSC Percentage, HSC Percentage, Entrance Test Percentile, Degree
Percentage, and Work Experience. In the next step, we design a
neural network with the above mentioned important features along
with HDDT output as an additional feature vector. The number of
hidden neurons in the hidden layer of the model is chosen based on
the recommendation of the model (see Remark in Section 3). Min-max
method is used for scaling the data in an interval of $[0,1]$. ANN
training was done using `neuralnet' implementation in R. We reported
the performance of different classifiers in terms of different
performance metrics in Table 2. It is clear from Table 2 that our
proposed methodology achieved an accuracy of 96\% for prediction in
business school data set.

\begin{table}[H]
\centering \caption{Quantitative measure of performance for
different classifiers}
    \begin{tabular}{ccccc}
        \hline
        Classifiers                                 & AUC       & F-measure & G-mean   & Accuracy \\ \hline
        CT                                          & 0.810     & 0.822     & 0.815    & 0.833    \\
        ANN                                         & 0.768     & 0.781     & 0.758    & 0.771    \\
        HDCT                                        & 0.933     & 0.936     & 0.925    & 0.931    \\
        HDRF                                        & 0.939     & 0.941     & 0.932    & 0.938    \\
        CCPDT                                       & 0.912     & 0.918     & 0.902    & 0.915    \\
        \textbf{IEC}           & \textbf{0.964}     & \textbf{0.969}     & \textbf{0.951}    & \textbf{0.960}    \\ \hline
    \end{tabular}

\end{table}

\section{Conclusion}

We proposed an imbalanced ensemble classifier (IEC) which takes into
account data imbalance and used it for feature selection cum
classification problems. Through experimental evaluation, we have
shown our proposed methodology performed well compared to the other
state-of-the-art models. It is also important to note that
``imbalanced data-oriented" algorithms perform well on the original
imbalanced datasets \cite{cieslak2012hellinger}. If we would like to
work with the original data without taking recourse to sampling, our
proposed methodology will be quite handy. IEC has the desired
statistical properties like universal consistency, less tuning
parameters and achieves higher accuracy than HDDT and ANN model. We
thereby conclude that for the imbalanced business school data set it
is sufficient to use IEC model without taking recourse to sampling
or any other imbalanced data-oriented single classifiers. Due to the
robustness of the proposed IEC algorithm, it can also be useful in
other imbalanced classification problems as well.

\bibliographystyle{elsarticle-num}
\bibliography{bibliography}

\end{document}